        \pgfplotsset{compat=1.18}
	\newtheorem{theorem}{Theorem}
	\newtheorem{lemma}{Lemma}
	\theoremstyle{definition}
	\theoremstyle{remark}
	\newtheorem*{remark}{Remark}
        \newtheorem{assumption}{Assumption}
\newcommand{\BEAS}{\begin{eqnarray*}}
	\newcommand{\EEAS}{\end{eqnarray*}}
\newcommand{\BEA}{\begin{eqnarray}}
	\newcommand{\EEA}{\end{eqnarray}}
\newcommand{\BEQ}{\begin{equation}}
	\newcommand{\EEQ}{\end{equation}}
\newcommand{\BIT}{\begin{itemize}}
	\newcommand{\EIT}{\end{itemize}}
\newcommand{\eg}{{\it e.g.}}
\newcommand{\ie}{{\it i.e.}}
\newcommand{\Prob}{\mathop{\mathbb{P}}}
\newcommand{\Expect}{\mathop{\mathbb{E}}}
\newcommand{\argmax}{\mathop{\rm argmax}}
\newcounter{exno}
\newlength{\exlabelwidth}
\long\def\@makecaption#1#2{
	\vskip 9pt 
	\begin{small}
		\setbox\@tempboxa\hbox{{\sffamily\bfseries #1} #2}
		\ifdim \wd\@tempboxa > 0.85\textwidth
		\begin{center}
			\begin{minipage}[t]{0.85\textwidth}
				\addtolength{\baselineskip}{-0.95pt}
				{\sffamily\bfseries #1} #2 \par
				\addtolength{\baselineskip}{0.95pt}
			\end{minipage}
		\end{center}
		\else 
		\hbox to\hsize{\hfil\box\@tempboxa\hfil}  
		\fi
	\end{small}\par
}
\newcounter{oursection}
\newcounter{lecture}
\newcommand{\calC}{{\mathcal C}}
\newcommand{\calD}{{\mathcal D}}
\newcommand{\calF}{{\mathcal F}}
\newcommand{\reals}{{\mathbb{R}}}
	\def\BibTeX{{\rm B\kern-.05em{\sc i\kern-.025em b}\kern-.08em
	    T\kern-.1667em\lower.7ex\hbox{E}\kern-.125emX}}
         \definecolor{DarkGreen}{RGB}{0,100,0}
\newcounter{edremcounter}
\definecolor{darkred}{RGB}{150,0,0}
\definecolor{darkgreen}{RGB}{0,150,0}
\definecolor{darkblue}{RGB}{0,0,150}
\begin{document}
		
		\title{Cooperative Multi-Agent Constrained Stochastic Linear Bandits
			%{\footnotesize \textsuperscript{*}Note: Sub-titles are not captured in Xplore and
				%should not be used}
			%\thanks{Identify applicable funding agency here. If none, delete this.}
		}

            \author{
    Amirhossein Afsharrad$^{1,2}$$^*$,
    Parisa Oftadeh$^3$$^*$,
    Ahmadreza Moradipari$^4$$^*$,
    Sanjay Lall$^2$
    \thanks{$^*$Equal contribution. $^1$Aktus AI, 
    \texttt{amir@aktus.ai}.
    $^2$Stanford University, \texttt{\{afsharrad,lall\}@stanford.edu}. $^3$University of California, Santa Cruz, \texttt{poftadeh@ucsc.edu}.
    $^4$University of California, Santa Barbara, \texttt{ahmadreza_moradipari@ucsb.edu}.
    This work was supported by NSF under ECCS CPS project number 2125511.
    }
}
	
		% \author{\IEEEauthorblockN{Amirhossein Afsharrad}
		% 	\IEEEauthorblockA{\textit{Department of Electrical Engineering} \\
		% 		\textit{Stanford University}\\
		% 		Stanford, CA \\
		% 		afsharrad@stanford.edu}
		% 	\and
  %               \IEEEauthorblockN{Ahmadreza Moradipari}
		% 	\IEEEauthorblockA{\textit{  Department of Electrical and Computer Engineering} \\
		% 		\textit{University of California, Santa Barbara}\\
		% 		Santa Barbara, CA \\
		% 		ahmadreza$\_$moradipari@ucsb.edu}
  %               \and
		% 	\IEEEauthorblockN{Sanjay Lall}
		% 	\IEEEauthorblockA{\textit{Department of Electrical Engineering} \\
		% 		\textit{Stanford University}\\
		% 		Stanford, CA \\
		% 		lall@stanford.edu}
  		% }
		
		\maketitle
  
\begin{abstract}
In this study, we explore a collaborative multi-agent stochastic linear bandit setting involving a network of \(N\) agents that communicate locally to minimize their collective regret while keeping their expected cost under a specified threshold \(\tau\). Each agent encounters a distinct linear bandit problem characterized by its own reward and cost parameters, i.e., local parameters. The goal of the agents is to determine the best overall action corresponding to the average of these parameters, or so-called global parameters. In each round, an agent is randomly chosen to select an action based on its current knowledge of the system. This chosen action is then executed by all agents, then they observe their individual rewards and costs. We propose a safe distributed upper confidence bound algorithm, so called \textit{MA-OPLB}, and establish a high probability bound on its \(T\)-round regret. MA-OPLB utilizes an accelerated consensus method, where agents can compute an estimate of the average rewards and costs across the network by communicating the proper information with their neighbors. We show that our regret bound is of order $ \mathcal{O}\left(\frac{d}{\tau-c_0}\frac{\log(NT)^2}{\sqrt{N}}\sqrt{\frac{T}{\log(1/|\lambda_2|)}}\right)$, where $\lambda_2$ is the second largest (in absolute value) eigenvalue of the communication matrix, and $\tau-c_0$ is the known cost gap of a feasible action. We also experimentally show the performance of our proposed algorithm in different network structures. 
\end{abstract}

\section{Introduction}

Stochastic linear bandits have been widely researched in decision-making scenarios with a linear framework, such as recommendation systems or path routing \cite{DBLP:journals/corr/abs-1204-5721, abbasi}. In these problems, at each time step, an agent selects an action and receives a corresponding random reward, which has an expected value that depends linearly on the context of the action. The agent's objective is to maximize the total reward over \(T\) rounds. In this work, we study the constrained stochastic linear bandit problem in a multi-agent context, where a group of \(N\) agents work together locally within a network to collectively maximize their rewards while ensuring that the expected total cost remains below a specified threshold, \(\tau\). In particular, each agent is interacting with a local constrained linear bandit problem with unknown reward and cost parameters. However, the goal of the agent is to collaboratively select the optimal action with respect to the global network parameters. Each agent deals with its unique constrained linear bandit problem, meaning that the reward and cost parameters vary across agents. The network's overall reward and cost parameters are calculated as the averages of all the agents' parameters. To minimize excessive communication, we establish two key assumptions: agents can only share information with their immediate neighbors within the network, and each communication step causes an increase in regret. Under these conditions, we provide a regret bound that depends on the spectral gap of the structure matrix \((1 - |\lambda_2|)\) and the cost gap of a known feasible action. 

% \textbf{A Motivating Example.} Consider a fashion brand that collaborates with \(N\) influencers across various locations, where these influencers act as agents within our model. The primary aim of the brand is to maximize its global welfare, which involves agreeing on the optimal product (i.e., action) that aligns with the entire network's true reward parameter, facilitated by its influencers. However, the brand must also adhere to a budget constraint, representing a linear safety constraint on the total cost of promotional activities. The impact of different products on their respective customer bases is not known beforehand to the influencers. Consequently, each influencer encounters their own linear bandit problem characterized by a unique reward parameter, reflecting the preferences of their customer interactions, but must also consider the cost parameter, which represents the financial impact of promoting a particular product. In this decentralized scenario which has been motivated by \cite{mamab}, influencers receive direct feedback only from their customers, but they can also estimate the success and cost-effectiveness of other influencers’ recommended products through observable metrics, such as the number of likes a post receives and the expenditure on promotions. This capability allows agents to collaborate with their neighbors, collectively steering towards the optimal action that not only best suits the network's true parameter but also complies with the global budget constraint (i.e., the most suitable and cost-effective product for the entire network).
%

As a motivational example, consider a fashion brand that utilizes a network of influencers across various locations, where these influencers act as agents within our model. The brand’s goal is to maximize its global welfare while keeping the overall cost below a known threshold (e.g., a budget constraint). In this scenario, each influencer interacts with their own target customers, which is represented as their local constrained linear bandit problem. Agents collaborate with their neighbors, collectively steering towards the optimal action that not only aligns with the network's true parameters but also adheres to the global budget constraint (i.e., finding the most suitable and cost-effective product for the entire network).

% \textbf{Multi-armed Bandits (MAB).} Two popular algorithms are commonly used. The first is the upper confidence bound (UCB) algorithm, which is based on the optimism in the face of uncertainty (OFU) principle\cite{Auer2002}. This algorithm selects the best feasible environment and the corresponding optimal action at each time step, considering the confidence regions around the unknown parameter. The second is the Thompson Sampling (TS) algorithm\cite{dc35850b-2ca1-314f-9e0d-470713436b17}, also known as posterior sampling. This method involves sampling an environment from the prior distribution at each time step and then selecting the optimal action based on the sampled parameter.

\textbf{Linear Bandits.} In stochastic linear bandit (LB) problems, actions are represented by feature vectors, and the expected reward for each action is a linear function of its feature vector. Two well-known algorithms for LB are linear UCB (LinUCB) and linear Thompson Sampling (LinTS). The study in \cite{abbasi} established a regret bound of order \(O(\sqrt{T \log T})\) for LinUCB, while the works in \cite{pmlr-v54-abeille17a},  demonstrated a regret bound of order \(O(\sqrt{T(\log T)^{3/2}})\) for LinTS in a frequentist setting, where the unknown reward parameter \(\theta_*\) is fixed.

\textbf{Constrained Bandits.} In constrained bandit (CB) problems, the goal is to maximize the cumulative reward while ensuring the costs stay below a specified threshold \cite{moradipari2021safe, varma2023stochastic, hutchinson2024safe }. In our setting, we focus on a specific algorithm called Optimistic Pessimistic Linear Bandit (OPLB)\cite{clb}. OPLB is designed for contextual linear bandits, where each action involves both reward and cost signals. The key challenge is to construct a policy that balances exploration and exploitation under these constraints. The regret bound for OPLB is \( \tilde{O} \left( \frac{d\sqrt{T}}{\tau - c_0} \right) \), where \( \tau \) is the cost threshold and \( c_0 \) is the cost of a feasible action. This method provides a significant improvement in managing constrained settings effectively.

\textbf{Multi-agent Stochastic Bandits.} In recent years, there has been a growing interest in studying distributed or decentralized bandit problems. In the multi-armed bandit setting, \cite{DBLP:journals/corr/LandgrenSL16} investigated distributed multi-armed bandits, proposing two UCB-based algorithms, coopUCB and coop-UCB2. Coop-UCB assumes that all eigenvalues and corresponding eigenvectors of the structure matrix are known, which is a stronger assumption. In \cite{9143736}, a multi-agent multi-armed bandit setting is considered, incorporating communication costs between agents and proposing an efficient sampling rule and communication protocol to maximize each agent's expected cumulative reward. In \cite{ijcai2017p24}, each agent can either send information to the network or pull an arm. Another line of work addresses distributed MAB problems with collisions, where simultaneous arm selection by multiple agents results in no reward or split rewards \cite{landgren2020distributed, 5738217}. In \cite{Kar2011BanditPI}, only one agent can play an action at each time step and observe the corresponding reward, while others rely on the shared information. The closest setting to ours is in \cite{mamab}, where each agent has a different reward distribution for each arm, and the total regret is minimized relative to the global best arm, chosen by majority vote, which limits the action set to finite. We extend this to linear bandits (LB), handling both finite and infinite action sets. Unlike their setting, where actions and rewards are independent across players and arms, we model actions by feature vectors, allowing dependencies. For instance, in music recommendation systems, genres like rock and metal are not completely independent. Recent work \cite{dcsb} addresses similar MAB problems, with agents sharing information only with their neighbors, using Chebyshev acceleration for consensus \cite{Arioli2014}. Our paper adopts similar communication protocols for the algorithm description. In \cite{DBLP:journals/corr/KordaSL16}, agents share information with randomly selected agents rather than just neighbors. \cite{Gagrani2018ThompsonSF} examines two-agent team-learning problems under decoupled dynamics with no information sharing and coupled dynamics with delayed information sharing.

In this work, we present a novel approach to address the multi-agent constrained linear bandit problem, where a network of agents aims to optimize their cumulative rewards while maintaining the expected costs below a specified threshold. Each agent faces unique local bandit problems with distinct reward and cost parameters and communicates only with its immediate neighbors over the network. Our contribution includes the development of a new distributed algorithm, MA-OPLB, which leverages an accelerated consensus method to approximate average rewards and costs across the network. We derive high-probability regret bounds for our algorithm, demonstrating that it effectively balances exploration and exploitation under communication constraints. This work extends the theoretical understanding of multi-agent bandits and offers practical insights for applications requiring decentralized decision-making under constraints.
\section{Preliminaries}\label{sec:prelim}
Before introducing the main problem formulation and our results, we first introduce a set of definitions and lemmas.

\textbf{Notations.} 
For a positive integer \(n\), the set \(\{1, 2, \ldots, n\}\) is denoted by \([n]\).
For a vector \(x \in \mathbb{R}^d\) and
positive definite matrix $\Sigma \in \reals^{d\times d}$ we define
    \(\|x\|_{\Sigma,p} = \|\Sigma^{1/2} x\|_p\).
In particular, in the case of \(p = 2\), we have \(\|x\|_{\Sigma,2} = \sqrt{x^\top \Sigma x}\).
We denote the set of all probability distributions over any set \(\calD\)  by \(\Delta_{\calD}\).

\section{Problem Formulation}\label{sec:prob_formulation}

\textbf{Initial setup.} We study a multi-agent network comprising \(N\) agents that sequentially select actions influencing the entire network. This network is depicted as an undirected graph \(G\), with agents corresponding to nodes, denoted by the set \(\mathcal{V} = \{1, \dots, N\}\). We assume \(G\) is a connected graph, \emph{i.e.}, a path exists between every pair of nodes \(i\) and \(j\). Agents communicate solely with their adjacent nodes within this framework. We model interactions among agents relative to the graph \(G\), as a doubly stochastic matrix \(W\) with non-negative elements (\emph{i.e.}, \(W_{ij} \geq 0\) for all \(i, j\)), where \(W_{ij} = 0\) if and only if no edge connects node \(j\) to node \(i\). The presence of an edge is predefined by the graph's structure and considered an input to our model. The exact values of the non-zero entries in \(W\) are design parameters, which will be determined during the implementation of our bandit algorithms, as discussed in subsequent sections.

Each agent faces a unique local constrained linear bandit problem, characterized by distinct cost and reward parameters. Agents exchange information with their neighbors to address the overarching bandit problem collaboratively. Next, we describe the bandit problem that agents are solving, and then explain the network structure that we adopt.

\textbf{Local bandit problems.} Each agent \(i\) is presented with a local constrained linear bandit problem, defined by a reward parameter \(\theta^i_* \in \mathbb{R}^d\) and a cost parameter \(\mu^i_* \in \mathbb{R}^d\). In every round \(t\), agent \(i\) receives a set of possible actions \(\mathcal{D}_t \subset \mathbb{R}^d\) from which it must select an action \(x_t\). This action \(x_t\) does not carry an index \(i\) to indicate uniformity across the network, as will be elaborated later. When the agent chooses an action \(x_t \in \mathcal{D}_t\), it observes a reward signal \(r^i_t\) and a cost signal \(c^i_t\) given by 
\begin{equation}\label{eq:reward-and-cost}
r^i_t = x_t^\top \theta_*^i + \eta^i_{r,t}, \quad c^i_t = x_t^\top \mu^i_* + \eta^i_{c,t},  
\end{equation}
where \(\eta^i_{r,t}\) and \(\eta^i_{c,t}\) are random variables representing noise in reward and cost, respectively. These noise variables adhere to conditions to be detailed subsequently. Each agent maintains a policy \(\pi^i_t\), defined as a distribution over \(\mathcal{D}_t\), from which the agent can sample an action.

\textbf{The global bandit problem.} 
We consider the true global reward and cost parameters to be represented as 
\begin{equation}\label{eq:global_theta_mu}
    \theta_*^{\sf global} = \frac{1}{N}\sum_{i=1}^N \theta_*^i, \quad \mu_*^{\sf global} = \frac{1}{N} \sum_{i=1}^N \mu_*^i,
\end{equation}
% where \(\alpha \in \mathbb{R}^N\) is a predefined vector of non-negative constants that sum up to \(1\). 
%
At each round \(t\), the network coordinator randomly selects an agent index \(a(t) \in \mathcal{V}\), and all agents in the network execute the action proposed by the selected agent \(a(t)\), \emph{i.e.}, \(x_t \sim \pi_t^{a(t)}\). Each agent then observes its local reward and cost signals \(r_t^i, c_t^i\).

The agents' objective is to choose actions that maximize the network's global expected reward over \(T\) rounds, subject to a \textit{linear constraint} as expressed by
\begin{equation}\label{eq:global_constraint}
    \mathbb{E}_{x \sim \pi^{a(t)}_t}(x^\top \mu_*^{\sf global}) \leq \tau, \quad \forall t \in [T],
\end{equation}
where \(\tau > 0\) represents the known \textit{constraint threshold}. 

Given that each agent \(i\) only accesses observations based on its individual reward and cost parameters \(\theta^i_*, \mu^i_*\), effective decision-making to maximize the global expected reward and adhere to the global constraint necessitates collaboration among the agents. Consequently, agents must communicate to accurately estimate the true reward and cost parameters \(\theta_*^{\sf global}, \mu_*^{\sf global}\) in \eqref{eq:global_theta_mu}. 

The policy \(\pi^{a(t)}_t\), selected each round \(t \in [T]\) must reside within the feasible policy set defined over the action set \(\mathcal{D}_t\), denoted as
\begin{equation}
\Pi_t = \left\{ \pi \in \Delta_{\mathcal{D}_t} : \mathbb{E}_{x \sim \pi} \left( x^\top \mu_*^{\sf global} \right) \leq \tau \right\}.
\end{equation}
The optimization for maximum expected cumulative reward over \(T\) rounds translates into minimizing the \textit{constrained pseudo-regret},
\begin{equation}\label{eq:regret-def}
    \mathcal{R}(\theta_*^{\sf global}, T) = \sum_{t=1}^T \mathbb{E}_{x \sim \pi^*_t} \left( x^\top \theta_*^{\sf global} \right) - \mathbb{E}_{x \sim \pi^{a(t)}_t} \left( x^\top \theta_*^{\sf global} \right),
\end{equation}
where \(\pi^{a(t)}_t, \pi^*_t \in \Pi_t\) for all \(t \in [T]\). Here, \(\pi^*_t\) is the \textit{optimal feasible policy} for each round \(t\), formulated as
\begin{equation}
\pi^*_t = \max_{\pi \in \Pi_t} \mathbb{E}_{x \sim \pi_t} \left[ x^\top \theta_*^{\sf global} \right].
\end{equation}
This \(\pi^*_t\) represents the \textit{omniscient} feasible policy, achievable under full knowledge of the parameters \(\theta_*^{\sf global}\) and \(\mu_*^{\sf global}\), distinct from the best feasible policy attainable by an agent observing only noisy rewards and costs.

It is noteworthy that in our decentralized setup, while the network coordinator disseminates the selected action (\eg, a product) to the \(N\) agents (\eg, influencers), it does not receive any feedback from the social network nor performs computations, as this could be computationally intensive given the potentially large number of samples.

\textbf{Network structure.} The structure matrix \(W\) of the network is assumed to be symmetric and doubly stochastic, with each row and column summing to \(1\), ensuring \(1\) as an eigenvalue of \(W\). This leads to the eigenvalues of \(W\), satisfying the condition \(1 = \lambda_1 > |\lambda_2| \geq \cdots \geq |\lambda_N| \geq 0\), indicating that \(1\) is the largest eigenvalue in absolute value. Following the approaches of \cite{malb, dcsb}, our algorithm imposes information constraints on the agents regarding the graph's structure. Specifically, each agent is aware only of its direct neighbors, the total number of nodes in the network (\emph{i.e.}, agents), and the absolute value of the second largest eigenvalue, \(|\lambda_2|\).

The algorithm we propose utilizes decentralized communication among neighboring agents, facilitating the computation of average reward and cost values derived from the collective actions \(x_t^{a(t)}\) taken by the network. This mechanism enables agents to collaboratively estimate the average outcomes based on the decentralized interactions. The specifics of how agents determine the network action \(x_t^{a(t)}\) and the communication protocol for calculating the average reward, including the selection of non-zero values in the matrix \(W\), will be investigated in subsequent sections.

\textbf{Assumptions.} Before proceeding to the introduction of our algorithm, we state some standard assumptions.

\begin{assumption}\label{ass1}
    For all \(t\in T\), the reward and cost noise random variables \(\eta_{r, t}^i\), \(\eta_{c,t}^i\)
    are conditionally \(R\)-sub-Gaussian, 
    \begin{align*}
        &\Expect\left[\eta_{r, t}^i|\calF_{t-1}\right]=0, \quad \Expect\left[\exp\left(\alpha\eta_{r, t}^i\right)|\calF_{t-1}\right]\leq \exp\left(\alpha^2R^2/2\right), \\
        &\Expect\left[\eta_{c,t}^i|\calF_{t-1}\right]=0, \quad \Expect\left[\exp\left(\alpha\eta_{c,t}^i\right)|\calF_{t-1}\right]\leq \exp\left(\alpha^2R^2/2\right)
    \end{align*}
    for any \(\alpha\in\reals, i\in[N]\), where \(\calF_t\) is the filtration that includes all events \((x_{1:t+1}, \eta^i_{r,{1:t}}, \eta^i_{c,{1:t}}\) up to round \(t\).
\end{assumption}

\begin{assumption}\label{ass2}
    There is a known constant \(S > 0\), such that \(\|\theta^i_*\|\leq S\) and \(\|\mu^i_*\|\leq S\) for all \(i\in[N]\).
\end{assumption}

\begin{assumption}\label{ass3}
    The decision set $\calD_t$ is bounded. Specifically,
    \(\max_{t\in[T]}\max_{x\in\calD_t}\|x\|\leq L\).
\end{assumption}

\begin{assumption}\label{ass4}
    For all \(t\in [T]\) and \(x\in\calD_t\), the mean rewards and costs are bounded, \ie, \(x^\top\theta^{\sf global}_*  \in [0,1]\) and \(x^\top\mu^{\sf global}_* \in [0,1]\) for \(i\in[N]\).
\end{assumption}

\begin{assumption}\label{ass5}
    There exists a universally feasible action \(x_0 \in \calD_t\) for all \(t \in [T]\) associated with the cost \(c_0\). This means that \(x_0^\top \mu^{\sf global}_* = c_0 < \tau\). We assume that the value of \(c_0\) is known. Extending this to the cases where \(c_0 \) is unknown is straightforward. See \cite{clb} for further details on this scenario.
\end{assumption}

\section{Algorithm description}
To solve the networked constrained linear bandit problem, we introduce an enhanced algorithm based on the Optimistic-Pessimistic Linear Bandit (OPLB) framework proposed in \cite{clb}, named Multi-Agent OPLB (MA-OPLB). The pseudocode for MA-OPLB is given in Algorithm \ref{alg:multi_agent_oplb}. This algorithm operates episodically, with each episode divided into two distinct phases: 1) the exploration-exploitation phase and 2) the communication phase. During the exploration-exploitation phase, agents select and execute actions to optimize rewards while respecting the constraints. In the communication phase that follows, agents exclusively exchange information with their immediate neighbors in the network and repeatedly play the same action. By structuring the algorithm in this way, we can effectively balance action selection and communication needs. This design also allows us to measure the impact of prolonged communication phases, as the lack of estimation update during these periods contributes to an increase in regret.

\textbf{Exploration-exploitation phase.} Let \(t_s\) denote the start time of episode \(s\). At \(t_s\), the network coordinator randomly selects an agent index \(a(s)\). This agent \(a(s)\), utilizing a regularized least-squares estimate \(\hat{\theta}_{t_s}^{{\sf global}, a(s)}\) and \(\hat{\mu}_{t_s}^{{\sf global}, a(s)}\) for the global network parameters based on information up to \(t_s\), constructs ellipsoidal confidence regions \(\mathcal{C}_{r, t_s}^{a(s)}\) and \(\mathcal{C}_{c, t_s}^{a(s)}\), defined as 
\begin{align}\label{eq:conf_regions}
\begin{split}
    \mathcal{C}^i_{r, t} &= \left\{ \theta \in \mathbb{R}^d : \|\theta - \hat{\theta}^{{\sf global}, i}_{t}\|_{\Sigma_t} \leq \rho \beta_t \right\},\\
\mathcal{C}^i_{c, t} &= \left\{ \mu \in \mathbb{R}^d : \|\mu - \hat{\mu}^{{\sf global}, i}_{t}\|_{\Sigma_t} \leq \beta_t \right\},
\end{split}
\end{align}
where \( \rho=1+\frac{2}{\tau-c_0} \), \(\|.\|_{\Sigma_t, 2}\) is defined in Section \ref{sec:prelim}, and \(\beta_t\) is given by
\begin{equation}\label{eq:beta}
    \beta_t=\frac{R}{\sqrt{N}}\sqrt{d\log \frac{1+sL^2/\lambda}{\delta}}+\sqrt{\lambda}S+\frac{L}{\sqrt{\lambda}}.
\end{equation}

The agent then determines the optimal feasible policy and action by solving the constrained optimization problem
\begin{equation}\label{eq:optimal-policy}
    \left(\pi_{t_s}^{a(s)}, \tilde{\theta}_{t_s}^{a(s)}\right) = \argmax_{\pi \in \Pi^{a(s)}_{t_s}, \theta \in \mathcal{C}_{r, t_s}^{a(s)}} \mathbb{E}_{x \sim \pi}[x^\top \theta],
\end{equation}
and samples the network action according to the computed policy, \ie, \(x_{t_s}\sim \pi_t^{a(s)}\). Next, the chosen action is played by all the agents, and each agent observes their corresponding reward and cost, \(r_{t_s}^i\) and \(c_{t_s}^i\), according to \eqref{eq:reward-and-cost}. Next, MA-OPLB activates the communication phase.

\textbf{Communication phase.}
During the communication phase, agents do not engage in new action selection, and rather repeat the last selected network action. They instead focus on exchanging information with their neighbors to collectively approximate the average reward and cost signals observed during episode \(s\). Specifically, agents aim to estimate the average network reward \(\frac{1}{N} \sum_{i=1}^N r_{t_s}^i\) and the average network cost \(\frac{1}{N} \sum_{i=1}^N c_{t_s}^i\). Throughout this phase, agents forego selecting new actions, resulting in an increase in regret proportional to the duration of the communication phase. This highlights a trade-off between the precision of the agents' average reward and cost estimate and the regret incurred from extended communication periods. 

The communication protocol enables each agent to share its latest observed rewards \(r_{t_s}^i\) and costs \(c_{t_s}^i\) with its neighbors, following the network's structure matrix \(W\). Agents then receive their neighbors' signals and update their estimates using an accelerated consensus procedure as described in \cite{dcsb}. The details of this procedure, referred to as the mix function in line 12 of Algorithm \ref{alg:multi_agent_oplb}, are summarized in Algorithm \ref{alg:mix}.

Upon completing the communication phase of episode \(s\), each agent \(j\) has estimates \(r_{t_s}^i\) and \(c_{t_s}^i\), which approximates the average rewards and costs across all agents during the episode, \ie, \(\frac{1}{N} \sum_{i=1}^N r_{t_s}^i\) and \(\frac{1}{N} \sum_{i=1}^N c_{t_s}^i\). Using this approximation, agents update their regularized least-squares (RLS) estimates, given by
\begin{align}\label{eq:rls}
    \hat{\theta}^{{\sf global}, i}_{t}=\Sigma_t^{-1}\sum_{k=1}^s x_{t_k}y_k^i, \quad 
    \hat{\mu}^{{\sf global}, i}_{t}=\Sigma_t^{-1}\sum_{k=1}^s x_{t_k}z_k^i,
\end{align}
where $\Sigma_t = \lambda I + \sum_{k=1}^s x_{t_k}x_{t_k}^\top.$
The precision of these approximations is influenced by the length of the communication phase \(q(s)\), which we optimize to manage the trade-off between communication duration and regret. 

Lemma \ref{lemma:consensus_accuracy} from \cite{dcsb} provides a bound on the accuracy of the average reward approximation after a specified number of communication steps using the accelerated consensus procedure.

\begin{lemma}\label{lemma:consensus_accuracy}
    Let $W$ be a structure matrix with real eigenvalues such that $\mathbf{1}^\top W = \mathbf{1}^\top$ and $W \mathbf{1} = \mathbf{1}$ and all the eigenvalues are no more than one in absolute value. Fix $\epsilon > 0$, and let $q(\epsilon) = \left\lceil {\log(2N/\epsilon)}/{\sqrt{2 \log(1/|\lambda_2|)}} \right\rceil$ where $\lambda_2$ is the second largest eigenvalue in absolute value of matrix $W$. Then after $q(\epsilon)$ communication time steps based on Algorithm \ref{alg:mix}, each agent can construct a polynomial $p_{q(\epsilon)}(W)$ of the structure matrix $W$ that satisfies $\left\| p_{q(\epsilon)}(W) - \frac{1}{N} \mathbf{1} \mathbf{1}^\top \right\|_2 \leq \epsilon / N$.
\end{lemma}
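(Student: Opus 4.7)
The plan is to construct $p_{q(\epsilon)}$ as a scaled Chebyshev polynomial and then bound the spectral norm via the spectral decomposition of $W$. Since $W$ is symmetric with $W\mathbf{1}=\mathbf{1}$, its orthonormal eigendecomposition $W = \sum_{i=1}^N \lambda_i v_i v_i^\top$ has $\lambda_1 = 1$ with $v_1 = \mathbf{1}/\sqrt{N}$ and $|\lambda_i| \leq |\lambda_2| < 1$ for $i\geq 2$. Hence $\frac{1}{N}\mathbf{1}\mathbf{1}^\top = v_1 v_1^\top$, and for any polynomial $p$ with $p(1)=1$ one obtains
\[
p(W) - \tfrac{1}{N}\mathbf{1}\mathbf{1}^\top = \sum_{i=2}^N p(\lambda_i)\, v_i v_i^\top,
\]
so that $\|p(W) - \tfrac{1}{N}\mathbf{1}\mathbf{1}^\top\|_2 = \max_{i\geq 2}|p(\lambda_i)|$. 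It therefore suffices to exhibit a polynomial of degree $q(\epsilon)$ with $p(1)=1$ that is uniformly small on $[-|\lambda_2|, |\lambda_2|]$.

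The natural candidate is $p_k(x) = T_k(x/|\lambda_2|)/T_k(1/|\lambda_2|)$, where $T_k$ is the Chebyshev polynomial of the first kind. Since $|T_k(\cdot)| \leq 1$ on $[-1,1]$, this gives $|p_k(x)| \leq 1/T_k(1/|\lambda_2|)$ for $|x|\leq|\lambda_2|$, reducing the claim to verifying $T_{q(\epsilon)}(1/|\lambda_2|) \geq N/\epsilon$. The closed form $T_k(y) = \cosh(k\,\cosh^{-1}(y))$ for $y > 1$ yields $T_k(y) \geq \tfrac{1}{2}\exp(k\,\cosh^{-1}(y))$, so the task further reduces to lower-bounding $\cosh^{-1}(1/|\lambda_2|)$.

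The main technical step, and the only real obstacle, is the elementary inequality $\cosh^{-1}(y) \geq \sqrt{2\log y}$ for $y > 1$. I would prove this by the substitution $y = e^u$, reducing it to $\cosh(\sqrt{2u}) \leq e^u$, which follows from a term-by-term Taylor series comparison using $(2k)! \geq 2^k k!$. Substituting $y = 1/|\lambda_2|$ and invoking the definition $q(\epsilon) \geq \log(2N/\epsilon)/\sqrt{2\log(1/|\lambda_2|)}$ then gives $q(\epsilon)\cdot\cosh^{-1}(1/|\lambda_2|) \geq \log(2N/\epsilon)$, hence $T_{q(\epsilon)}(1/|\lambda_2|) \geq N/\epsilon$, as required. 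Finally, I would observe that the three-term recurrence for Chebyshev polynomials implies $p_{q(\epsilon)}(W)$ is exactly the iteration matrix generated by Algorithm \ref{alg:mix}, so the bound transfers directly to the accelerated consensus scheme.
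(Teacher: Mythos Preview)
Your argument is correct and is essentially the standard Chebyshev-acceleration proof: reduce $\|p(W)-\tfrac{1}{N}\mathbf{1}\mathbf{1}^\top\|_2$ to $\max_{i\ge 2}|p(\lambda_i)|$ via the spectral decomposition, take $p_k(x)=T_k(x/|\lambda_2|)/T_k(1/|\lambda_2|)$, and then lower-bound $T_k(1/|\lambda_2|)$ using $\cosh^{-1}(y)\ge\sqrt{2\log y}$. The Taylor-series verification of the last inequality via $(2k)!\ge 2^k k!$ is clean, and your remark that Algorithm~\ref{alg:mix} implements precisely the Chebyshev three-term recurrence (so that the bound applies to the iterates produced by the consensus scheme) closes the loop.

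As for the comparison you were asked about: the paper does not actually prove Lemma~\ref{lemma:consensus_accuracy}. It is quoted verbatim as a known result from \cite{dcsb} (with the Chebyshev construction further attributed to \cite{scaman2017optimal} and \cite{Arioli2014}), and the paper only \emph{uses} it downstream to control the consensus error in Theorem~\ref{thm:conf-sets}. So there is no in-paper argument to compare against; what you have written is a complete, self-contained proof that the paper simply outsources to the literature. One small caveat worth making explicit: your spectral-decomposition step uses orthonormal eigenvectors, which requires $W$ to be symmetric. The lemma as stated only asserts real eigenvalues and double stochasticity, but the paper's standing assumption (Section~\ref{sec:prob_formulation}, ``Network structure'') is that $W$ is symmetric, so this is consistent with how the result is invoked.
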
 

Additionally, define $r_s = [r^i_{t_s}]_{i=1,\ldots,N}$ and $c_s = [ 
 c^i_{t_s}]_{i=1,\ldots,N}$ as the vectors representing the rewards and costs observed by all agents at the start of episode $s$, prior to any communication. After $q(\epsilon)$ communication rounds, the agents utilize the mix function from Algorithm \ref{alg:mix} to apply the polynomial $p_{q(\epsilon)}(W)$ to these vectors, resulting in $p_{q(\epsilon)}(W) r_s$ and $p_{q(\epsilon)}(W) c_s$. The $j$-th components of these resulting vectors, denoted as $(p_{q(\epsilon)}(W) r_s)_j$ and $(p_{q(\epsilon)}(W) c_s)_j$, yield the estimates $y^j_s$ and $z^j_s$ for the average rewards and costs across the network, i.e., $\frac{1}{N} \sum_{i=1}^N  r^i_{t_s}$ and $\frac{1}{N} \sum_{i=1}^N  c^i_{t_s}$. 

Using a predefined $\epsilon$, Lemma \ref{lemma:consensus_accuracy} helps in bounding the approximation error for the reward estimate as
\small
\begin{align}
   \left\| p_{q(\epsilon)}(W) r_s - \frac{1}{N} \mathbf{1} \mathbf{1}^\top r_s \right\|_2 \leq \left\| p_{q(\epsilon)}(W) - \frac{1}{N} \mathbf{1} \mathbf{1}^\top \right\|_2 \| r_s \|_2 \leq \epsilon, \nonumber
\end{align} \normalsize
and similarly for the cost estimate. Here it is assumed that $\| r_s \|_2 \leq N$ and $\| c_s \|_2 \leq N$ as per Assumption \ref{ass4}. Each agent $j$ calculates the components $(p_{q(\epsilon)}(W) r_s)_j$ and $(p_{q(\epsilon)}(W) c_s)_j$ using only its own information and that from its neighbors. To compute the Chebyshev polynomials $p_{q(\epsilon)}(W)$, agents only require knowledge of the total number of agents $N$ and the second largest eigenvalue of $W$ in absolute value, $|\lambda_2|$ (details can be found in \cite{scaman2017optimal}).

\begin{algorithm}[H]
    \caption{Multi-agent OPLB}\label{alg:multi_agent_oplb}
    \begin{algorithmic}[1]
        \Require $\delta, T, \lambda$
        \State \(t\gets 1, s\gets 1\)
        \While{$t < T$}
            \State $q(s) \gets \left\lceil \log(2Ns) / \sqrt{2\log(1/|\lambda_2|)} \right\rceil$
            \State Set the start time of episode \(s\): $t_s \leftarrow t$
            \State The network coordinator selects the agent $\alpha(s)$
            \State Agent $a(s)$ computes policy
            \[
            \left(\pi_{t_s}^{a(s)}, \tilde{\theta}_{t_s}^{a(s)}\right) = \argmax_{\pi\in\Pi^{a(s)}_{t_s}, \theta\in\calC_{r,t_s}^{a(s)}} \Expect_{x\sim \pi}[x^\top \theta]
            \]
            \State Agent \(a(s)\) selects action \(x_{t_s}\sim \pi_t^{a(s)}\)
            \State Agents play the network action \(x_t\)
            \State Each agent \(i\) observes 
                \begin{align*}
                    r^i_{t_s} = x_{t_s}^\top \theta_*^i + \eta^i_{r,t_s}, ~ c^i_{t_s} = x_{t_s}^\top \mu^i_* + \eta^i_{c,t_s}
                \end{align*}

            \If{$t_s + q(s) > T$}
                \Return
            \Else
                \State Activate communication phase:
                \For{$h = 0$ to $q(s) - 1$}
                    \State $ r_{t_s}^i \leftarrow \text{mix}( r_{t_s}^i, h, i, [W_{ij}]_{j=1}^N)$ 
                    \State $c_{t_s}^i \leftarrow \text{mix}( c_{t_s}^i, h, i, [W_{ij}]_{j=1}^N)$
                    \State Play the last selected network action $x_{t_s}$
                \EndFor
                % \State $y_s^i \leftarrow r_{t_s}^i$
            \EndIf
            \State \(y_s^i\gets r_{t_s}^i, \quad z_s^i\gets c_{t_s}^i\)
            \State $t \leftarrow t_s + q(s)$
            \State Update the RLS-estimates \(\hat{\theta}^{{\sf global}, i}_{t}\) and \(\hat{\mu}^{{\sf global}, i}_{t}\)
            \State Build the confidence regions \(\calC^i_{r, t},\calC^i_{c, t}\) according to \eqref{eq:conf_regions}
            % \begin{align*}
            %  \calC^i_{r, t}&=\{\theta\in\reals^d:\|\theta - \hat{\theta}^{{\sf global}, i}_{t}\|_{\Sigma_t} \leq \alpha_r\beta(\delta, d)\}   \\
            %  \calC^i_{c, t}&=\{\mu\in\reals^d:\|\mu - \hat{\mu}^{{\sf global}, i}_{t}\|_{\Sigma_t} \leq \alpha_c\beta(\delta, d)\}   
            % \end{align*}
            \State $s \leftarrow s + 1$
        \EndWhile
    \end{algorithmic}
\end{algorithm}

\begin{algorithm}[H]
    \caption{\(\text{mix}(\alpha_h^i, h, i, [W_{ij}]_{j=1}^N)\)}\label{alg:mix}
    \begin{algorithmic}[1]
    \Require \(h,\alpha_h^i, \alpha_{h-1}^i, \lambda_2, i, [W_{ij}]_{j=1}^N \)
        \If{$h = 0$}
            \State $c_0 \gets 1/2$, $c_{-1} \gets 0$
            \State $\alpha_0^i \gets \alpha_0^i / 2$, $\alpha_{-1}^i \gets (0, \ldots, 0)$
        \EndIf
        \State send $\alpha_h^i$ to the neighbors, and receive their corresponding values $\alpha_h^j$, $\forall j \in \mathcal{N}(i)$
        \State Compute: $z_h^i = \sum_{j \in \mathcal{N}(i)} 2W_{i,j} \alpha_h^j / |\lambda_2|$
        \State Update: $c_{h+1} \gets 2c_h / |\lambda_2| - c_{h-1}$
        \State Compute: $\alpha_{h+1}^i = \frac{c_h}{c_{h+1}} z_h^i - \frac{c_{h-1}}{c_{h+1}} \alpha_{h-1}^i$
        \If{$h = 0$}
            \State $c_0 \gets 2c_0$, $\alpha_0^i \gets 2\alpha_0^i$
        \EndIf
        \State \Return $\alpha_{h+1}^i$
    \end{algorithmic}
\end{algorithm}

\section{Regret Analysis}
Recall that each agent \(i\) operates with its own reward and cost parameters \(\theta^i_*\) and \(\mu^i_*\), respectively. Nevertheless, the objective is to optimize the global network reward characterized by the parameters \(\theta_*^{\sf global}\) and \(\mu_*^{\sf global}\), which are the averages of the agents' individual parameters. Each agent must estimate the global parameters, yet communication constraints allow them to exchange information only with their immediate neighbors. Our approach involves periodically incorporating communication rounds into the algorithm, enabling agents to disseminate their gathered data. Specifically, after receiving their reward and cost signals, agents engage in an accelerated consensus mechanism to share information. This process permits agents to approximate the average network reward and cost. Subsequently, in Theorem \ref{thm:conf-sets} (Theorem IV.1 of \cite{malb}), it is demonstrated that using these approximations, each agent can construct a revised confidence ellipsoid that probabilistically contains the true global parameters \(\theta_*^{\sf global}\) and \(\mu_*^{\sf global}\).

\begin{theorem}\label{thm:conf-sets}
    Let Assumptions \ref{ass1}, \ref{ass2}, \ref{ass3}, and \ref{ass4} hold. Fix any \(\delta \in (0,1)\), and let the structure matrix \(W\) satisfy the conditions required in Lemma \ref{lemma:consensus_accuracy}. Furthermore, let \(t_s\) be the start time of episode \(s\). We choose the length of the communication phase as \(q(s) = \left\lceil \log(2Ns) / \sqrt{2\log(1/|\lambda_2|)} \right\rceil\). Then, at each time \(t\) in the interval \([t_s, t_{s+1})\) and using the information up to round \(t_s\), each agent \(i\) can construct confidence regions \(\calC^i_{r, t}\) and \(\calC^i_{c, t}\) according to \eqref{eq:conf_regions}. Then, the parameters \(\theta_*^{\sf global}\) and \(\mu_*^{\sf global}\) are included in the confidence regions \(\calC^i_{r, t}\) and \(\calC^i_{c, t}\) with probability at least \(1-\delta\), respectively, \ie,
    \[
    \Prob(\theta_*^{\sf global}\in \calC^i_{r, t}) \geq 1 - \delta, \quad \Prob(\mu_*^{\sf global}\in \calC^i_{c, t}) \geq 1 - \delta.
    \]
\end{theorem}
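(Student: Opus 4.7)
The plan is to treat the construction of the confidence regions as a perturbed version of the self-normalized martingale analysis of Abbasi-Yadkori \emph{et al.}~\cite{abbasi}, where the perturbation accounts for the finite-precision error produced by the accelerated consensus of Algorithm~\ref{alg:mix}. I focus on $\calC^i_{r,t}$; the cost case is identical up to relabeling. First I would introduce the \emph{ideal} RLS estimator that would arise if each agent had exact access to the network-average reward,
\begin{equation*}
\hat{\theta}^{\,\sf ideal}_t = \Sigma_t^{-1}\sum_{k=1}^{s} x_{t_k}\,\bar{r}_k, \qquad \bar{r}_k = \frac{1}{N}\sum_{i=1}^{N} r^i_{t_k} = x_{t_k}^\top \theta^{\sf global}_* + \bar{\eta}_{r,k},
\end{equation*}
with $\bar{\eta}_{r,k}=\tfrac{1}{N}\sum_{i} \eta^i_{r,k}$. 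Under the standard independence of per-agent noise across $i$, Assumption~\ref{ass1} makes $\bar{\eta}_{r,k}$ conditionally $R/\sqrt{N}$-sub-Gaussian, which is precisely what produces the $R/\sqrt{N}$ prefactor in $\beta_t$.

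Second I would write $\hat{\theta}^{{\sf global},i}_t = \hat{\theta}^{\,\sf ideal}_t + \Sigma_t^{-1}\sum_{k=1}^{s} x_{t_k}\, e_k^i$ with $e_k^i = y_k^i - \bar{r}_k$, and apply the triangle inequality in the $\Sigma_t$-norm:
\begin{equation*}
\|\hat{\theta}^{{\sf global},i}_t - \theta^{\sf global}_*\|_{\Sigma_t} \leq \lambda\|\theta^{\sf global}_*\|_{\Sigma_t^{-1}} + \Big\|\sum_{k} x_{t_k}\bar{\eta}_{r,k}\Big\|_{\Sigma_t^{-1}} + \Big\|\sum_{k} x_{t_k} e_k^i\Big\|_{\Sigma_t^{-1}}.
\end{equation*}
The first summand is at most $\sqrt{\lambda}S$ by Assumption~\ref{ass2} together with $\Sigma_t\succeq\lambda I$; the second is at most $\tfrac{R}{\sqrt{N}}\sqrt{d\log\tfrac{1+sL^2/\lambda}{\delta}}$ with probability at least $1-\delta$ by the self-normalized martingale inequality of \cite{abbasi} applied at noise scale $R/\sqrt{N}$.

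For the third (deterministic) summand, the algorithm's choice $q(s)=\lceil \log(2Ns)/\sqrt{2\log(1/|\lambda_2|)}\rceil$ matches $q(\epsilon_s)$ of Lemma~\ref{lemma:consensus_accuracy} at $\epsilon_s=1/s$, so $|e_k^i|\leq 1/k$ for every $k\leq s$ and every $i$. Combined with $\|x_{t_k}\|_{\Sigma_t^{-1}}\leq L/\sqrt{\lambda}$ (from $\Sigma_t\succeq \lambda I$ and Assumption~\ref{ass3}) together with the elliptical potential $\sum_k\|x_{t_k}\|_{\Sigma_t^{-1}}^{2}\leq d$, the approximation term collapses into the remaining $L/\sqrt{\lambda}$ contribution of $\beta_t$. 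Summing the three bounds yields $\|\hat{\theta}^{{\sf global},i}_t-\theta^{\sf global}_*\|_{\Sigma_t}\leq \beta_t\leq \rho\beta_t$, so $\theta^{\sf global}_*\in\calC^i_{r,t}$ with probability at least $1-\delta$; the identical argument with $(z_k^i,\mu^{\sf global}_*)$ in place of $(y_k^i,\theta^{\sf global}_*)$ yields $\mu^{\sf global}_*\in\calC^i_{c,t}$ (the factor $\rho$ in $\calC^i_{r,t}$ is downstream pessimism inflation and does not participate in the concentration argument).

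I expect the main obstacle to lie in the third step: showing that the accumulated consensus error can actually be absorbed into an $O(L/\sqrt{\lambda})$ constant rather than producing an extraneous $O(L\log s/\sqrt{\lambda})$ term, which requires simultaneously exploiting the $\epsilon_s=1/s$ decay and the elliptical potential (or, equivalently, tightening the per-episode tolerance to $\epsilon_s\propto 1/s^2$). Care is also needed when specifying the filtration for the martingale inequality: it must include the random agent selection $a(\cdot)$ and the intermediate mix-function iterates so that each $x_{t_k}$ is $\calF_{t_k-1}$-measurable and the differences $x_{t_k}\bar{\eta}_{r,k}$ form an adapted martingale-difference sequence.
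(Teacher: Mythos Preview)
Your proposal is essentially the paper's own argument: the same three-term $\Sigma_t$-norm decomposition, the averaged noise $\bar\eta_{r,k}$ treated as $R/\sqrt{N}$-sub-Gaussian and controlled by the self-normalized bound of \cite{abbasi}, and the regularization term bounded via $\Sigma_t\succeq\lambda I$ and Assumption~\ref{ass2}. For the consensus-error term the paper is actually cruder than what you sketch---it does not invoke the elliptical potential at all, but writes the perturbation with a single scalar $\gamma$, bounds $\gamma\,\bigl\|\sum_{k=1}^{s} x_{t_k}\bigr\|_{\Sigma_t^{-1}}\le \gamma\, sL/\sqrt{\lambda}$, and then takes $\gamma=1/s$ to land on the $L/\sqrt{\lambda}$ contribution in $\beta_t$; the per-episode tolerance $|e_k^i|\le 1/k$ (versus a uniform $1/s$) that you flag as the main obstacle, and the filtration-measurability issues you raise, are not discussed in the paper's proof.
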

\begin{proof}
    We explain the proof for the reward parameter. The proof for the cost parameter is similar. First, recall that agents only update the RLS-estimates when the communication phase finishes, and they have computed an approximation of the average of the rewards and costs according to \eqref{eq:rls}. Then, from Lemma \ref{lemma:consensus_accuracy} we know that after the communication phase of each episode $s$, we have $\left| y_s^i - \frac{1}{N} \sum_{j=1}^N r_{t_s}^j \right| \leq \epsilon$.
Therefore, we can write
\[
y_s^i =  \frac{1}{N} \sum_{j=1}^N r_{t_s}^j + \gamma \quad \text{such that} \quad |\gamma| \leq \epsilon.
\]
Equivalently, we can get:
\begin{align*}
    &\hat{\theta}^{{\sf global}, i}_s = \Sigma_t^{-1} \sum_{k=1}^s x_{t_k} \left( \frac{1}{N} \sum_{j=1}^N r_{t_k}^j + \gamma \right)\\
    ~~~~&= \Sigma_t^{-1} \sum_{k=1}^s x_{t_k}\frac{1}{N}\sum_{j=1}^N \left( \left\langle x_{t_k}, \theta_*^j \right\rangle + \eta_{r,t_k}^j \right) + \Sigma_t^{-1} \sum_{k=1}^s x_{t_k} \gamma\\
    ~~~~&= \Sigma_t^{-1} \sum_{k=1}^s x_{t_k} \left( \left\langle x_{t_k}, \theta^{\sf global}_* \right\rangle + \zeta_{t_k} \right) + \Sigma_t^{-1} \sum_{k=1}^s x_{t_k} \gamma,    
\end{align*}
where \(\zeta_{t_k} = \frac{1}{N} \sum_{i=1}^N \eta_{r,t_k}^i\) is conditionally $\frac{R}{\sqrt{N}}$-sub-Gaussian noise. Then, following the machinery in \cite{abbasi}, we can write
\small
\begin{align}
    \left\| \hat{\theta}^{{\sf global}, i}_s - \theta^{\sf global}_* \right\|_{\Sigma_t} \leq &
\lambda \left\| \theta^{\sf global}_* \right\|_{\Sigma_t^{-1}} 
+ \left\| \sum_{k=1}^s x_{t_k} \zeta_{t_k} \right\|_{\Sigma_t^{-1}} \nonumber \\ &
+ \gamma \left\| \sum_{k=1}^s x_{t_k} \right\|_{\Sigma_t^{-1}}.
\end{align}
\normalsize
We can bound $\left\| \sum_{k=1}^s x_{t_k} \zeta_{t_k} \right\|_{\Sigma_t^{-1}}$ by Theorem 1 in \cite{abbasi}. Also, according to Assumption \ref{ass3}, we can have $\gamma \left\| \sum_{k=1}^s x_{t_k} \right\|_{\Sigma_t^{-1}} \leq \gamma s L / \sqrt{\lambda}$. 
If we choose $\gamma = 1/s$, with probability at least $1 - \delta$, we can get $\left\| \hat{\theta}^{{\sf global}, i}_s - \theta^{\sf global}_* \right\|_{\Sigma_t} \leq \beta_t$, where \(\beta_t\) is given in \eqref{eq:beta}, and the proof is complete.
\end{proof}
Consequently, we demonstrate that agents can build confidence regions that include the true global reward and cost parameters, \(\theta_*^{\sf global}\) and \(\mu_*^{\sf global}\), with high probability using approximate average rewards and costs. This approach, however, comes with an additional cost, specifically \(\sqrt{L}/\lambda\) as outlined in equation \eqref{eq:beta}. Note that this \(\lambda\) is the regularization parameter in least squares and is different from eigenvalues of the structure matrix \(W\). With this foundation, we are now ready to bound the overall regret of the network. In Theorem \ref{thm:regret_bound}, we provide a regret bound for the Multi-agent OPLB algorithm.

\begin{theorem}\label{thm:regret_bound}
     Let Assumptions Assumptions \ref{ass1}, \ref{ass2}, \ref{ass3}, and \ref{ass4} hold, $\lambda\geq \max(1, L^2)$,
and the structure matrix $W$ satisfy the condition mentioned in Lemma \ref{lemma:consensus_accuracy}. For a fixed $\delta\in(0,1)$, the regret defined in \eqref{eq:regret-def} for Algorithm \ref{alg:multi_agent_oplb} is upper bounded with probability at least $1-\delta$ by
\begin{align}\label{eq:maoplb-regret}
\small
    \begin{split}
        \mathcal{R}(\theta_*^{\sf global}, T) \leq& \Bigg(\frac{2L(\rho+1)\beta_{T'}}{\sqrt{\lambda}}\sqrt{2T'\log\frac{1}{\delta}}  \\ 
        &+ (\rho+1)\beta_{T'}\sqrt{2T'd\log\left(1+\frac{T'L^2}{\lambda}\right)}+1\Bigg)\\
        &\times \left(1+\frac{\log(2Ns)}{\sqrt{2\log(1/|\lambda_2|)}}\right), \nonumber
    \end{split}
    \normalsize
\end{align}
where \(T'=\frac{T}{1+q(1)}\), \(q(1)=\frac{\log(2N)}{2\log(1/|\lambda_2|)}\).
\end{theorem}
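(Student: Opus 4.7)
The plan is to adapt the OPLB regret analysis of \cite{clb} to our episodic multi-agent setting, where each episode consists of one action-selection step followed by a communication phase of length $q(s)$ during which the same action is replayed. First, I would condition on the high-probability event from Theorem \ref{thm:conf-sets} that at every episode start $t_s$, the true global parameters satisfy $\theta_*^{\sf global}\in\calC^{a(s)}_{r,t_s}$ and $\mu_*^{\sf global}\in\calC^{a(s)}_{c,t_s}$; a union bound with an appropriate rescaling of $\delta$ makes this event hold simultaneously across all episodes. Because the network plays $x_{t_s}$ for $1+q(s)$ rounds, the regret of episode $s$ equals $(1+q(s))\left(\Expect_{x\sim\pi^*_{t_s}}[x^\top\theta_*^{\sf global}] - \Expect_{x\sim\pi^{a(s)}_{t_s}}[x^\top\theta_*^{\sf global}]\right)$, and the total regret decomposes as a sum over the $S\le T'=T/(1+q(1))$ episodes, since each episode occupies at least $1+q(1)$ time steps.

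Next, I would bound the inner difference using the OPLB optimism--pessimism argument from \cite{clb}. On the good event, $(\pi^*_{t_s},\theta_*^{\sf global})$ lies in the optimistic feasible set used in \eqref{eq:optimal-policy}: the factor $\rho=1+2/(\tau-c_0)$ inflating the reward confidence region is exactly what buys the slack needed to certify feasibility of $\pi^*_{t_s}$ under the pessimistic cost constraint. This yields $\Expect_{x\sim\pi^*_{t_s}}[x^\top\theta_*^{\sf global}]\le \Expect_{x\sim\pi^{a(s)}_{t_s}}[x^\top\tilde\theta_{t_s}^{a(s)}]$, and a Cauchy--Schwarz step in the $\Sigma_{t_s}^{-1}$-norm bounds the per-step instantaneous regret by $(\rho+1)\beta_{t_s}\,\Expect_{x\sim\pi^{a(s)}_{t_s}}[\|x\|_{\Sigma_{t_s}^{-1}}]$.

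To pass from expected to realized actions I would invoke Azuma--Hoeffding on the martingale difference $\|x_{t_s}\|_{\Sigma_{t_s}^{-1}} - \Expect_{x\sim\pi_{t_s}^{a(s)}}[\|x\|_{\Sigma_{t_s}^{-1}}]$, using the uniform bound $\|x\|_{\Sigma_{t_s}^{-1}}\le L/\sqrt{\lambda}$ from Assumption \ref{ass3}; this produces the first term with the $\sqrt{2T'\log(1/\delta)}$ factor. Summing the realized widths over episodes and applying Cauchy--Schwarz together with the elliptical potential lemma (Lemma 11 of \cite{abbasi}) gives
\[
\sum_{s=1}^{S}\|x_{t_s}\|_{\Sigma_{t_s}^{-1}}\le \sqrt{2Sd\log(1+SL^2/\lambda)},
\]
which yields the second term after multiplying by $(\rho+1)\beta_{T'}$ and using the monotonicity of $\beta_t$. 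Factoring out the per-episode length $1+q(s)\le 1+\log(2Ns)/\sqrt{2\log(1/|\lambda_2|)}$ then recovers the multiplicative communication factor appearing in the theorem, while the additive $+1$ absorbs the initial/boundary episode.

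The main obstacle will be carefully bookkeeping the $\rho$-factor through the constrained optimism step — in particular verifying that, on the good event of Theorem \ref{thm:conf-sets}, the pessimistic cost constraint evaluated over $\calC^{a(s)}_{c,t_s}$ is satisfied by $\pi^*_{t_s}$, so that the optimistic feasible set in \eqref{eq:optimal-policy} truly contains $(\pi^*_{t_s},\theta_*^{\sf global})$ — and ensuring that the extra $L/\sqrt{\lambda}$ approximation-error term already folded into $\beta_t$ in \eqref{eq:beta} propagates through the sum without inflating the stated order.
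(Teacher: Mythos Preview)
Your proposal is correct and follows essentially the same route as the paper: decompose the $T$-round regret into at most $T' = T/(1+q(1))$ episodes, bound the per-episode regret by the single-step OPLB instantaneous regret times the episode length $1+q(s)$, and then invoke the constrained-linear-bandit regret machinery over $T'$ effective rounds. The paper's own proof is terser---it simply black-boxes the OPLB analysis by citing Theorem~2 of \cite{clb}---whereas you spell out the optimism--pessimism step, the Azuma--Hoeffding passage from expected to realized widths, and the elliptical potential lemma; these are exactly the ingredients of that cited theorem, so your sketch is a faithful unpacking rather than a different argument.
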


\begin{proof}
    The key idea is that the overall regret of
the network can be decomposed into two terms: regret caused
by the exploration-exploitation phases, and that of communication phases. We know that during the communication phase, agents share information and they only repeat the last selected network action. Recall that according to the Theorem \ref{thm:conf-sets}, the length of the communication phase
$s$ is $q(s) = \left\lceil \log(2Ns) / \sqrt{2\log(1/|\lambda_2|)} \right\rceil$. Therefore, the number of communication phases up to round $T$ is upper bounded by $T/q(1)$ (because the length of communication phases is increasing in time). We upper bound the regret defined in \eqref{eq:regret-def} up to round $T$ with the regret of a linear bandit problem with $T' = T/q(1)$ time steps multiplied with
maximum regret of communication phases. From Assumption \ref{ass2}, \ref{ass3}, and \ref{ass4}, we can bound the regret caused by the communication phases with 
$2\left\lceil \log(2Ns) / \sqrt{2\log(1/|\lambda_2|)} \right\rceil$. Then, from Theorem 2 in \cite{clb}, we can bound the regret of a constrained linear bandit problem with \(T'\) time steps with probability \(1-\delta \) as 
\begin{align}
    \frac{2L(\rho+1)\beta_{T'}}{\sqrt{\lambda}} & \sqrt{2T'\log\left(1/\delta\right)}  \nonumber \\& + (\rho+1)\beta_{T'}\sqrt{2T'd\log\left(1+{T'L^2}/{\lambda}\right)}.
\end{align}
Putting these arguments together completes the proof.
\end{proof}
\begin{remark}
    The upper bound provided for the regret of Algorithm \ref{alg:multi_agent_oplb} over a period of length \(T\) has the order of 
    \begin{align}\label{eq:bound-order}
        \mathcal{O}\left(\frac{d}{\tau-c_0}\frac{\log(NT)}{\sqrt{N}}\log\left(\frac{TL^2}{\delta}\right)\sqrt{\frac{T}{\log(1/|\lambda_2|)}}\right).
    \end{align}
\end{remark}
%
%
% \begin{figure}[h!]
%     \centering
%     \includegraphics[width=.4\textwidth]{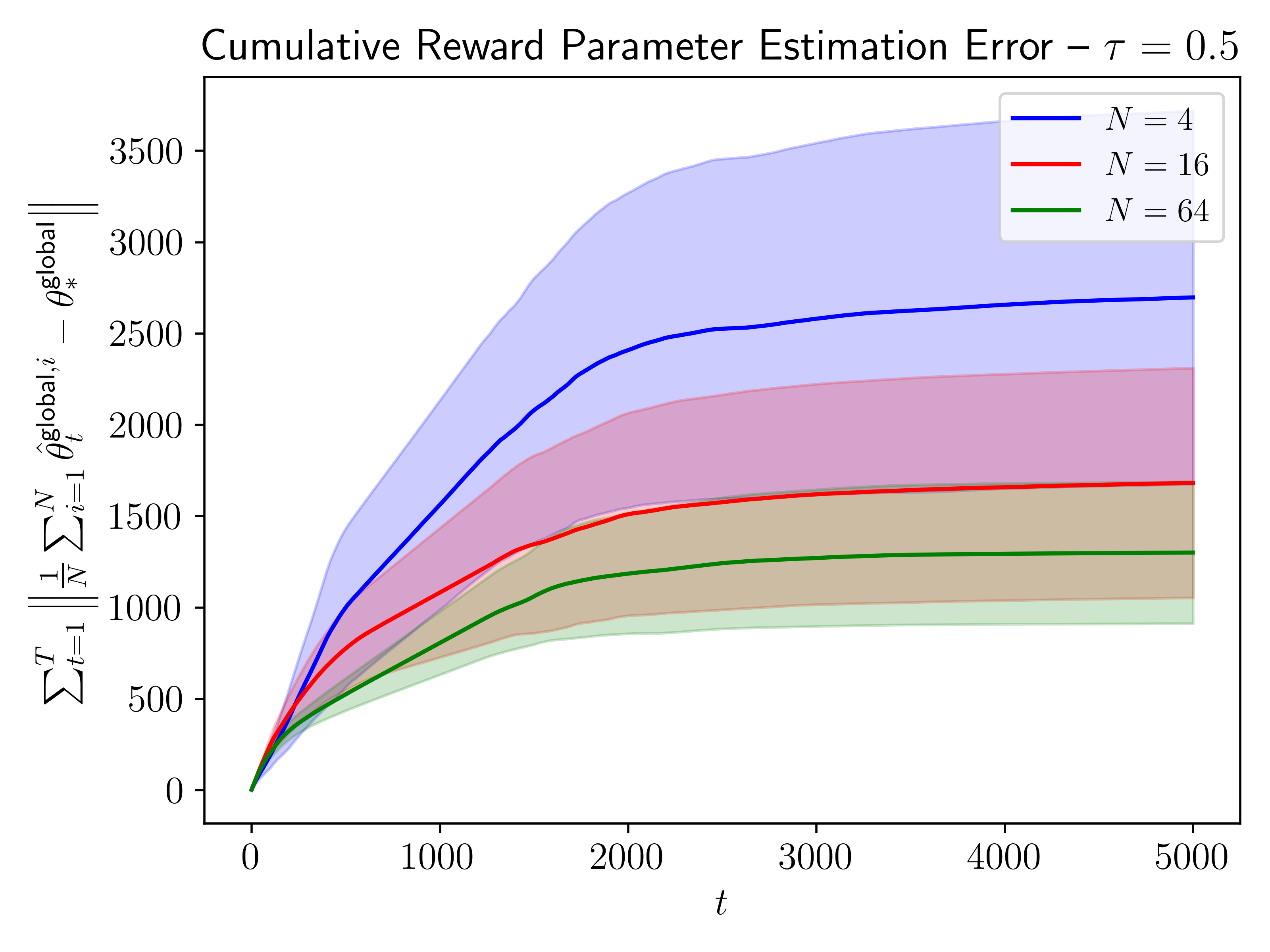}
%     \caption{Cumulative reward parameter estimation error 
%     % for complete graphs for different values of $N$
%     }
%     \label{fig:cumulative-parameter-error}
% \end{figure}
%
Equation \eqref{eq:bound-order} shows how the regret bound depends on different components of the problem. Two important elements are the spectral gap of the structure matrix, defined as $SG(W)=1-|\lambda_2|$, and the cost gap of the known feasible action \(\tau-c_0\). Networks with larger spectral gaps and safe actions with larger cost gaps will result in a smaller regret bound and a better performance of our algorithm.
%%%%%%%%%%%%%%%%%%%%%%%%%%%%%%%%%%%
%
%
%
\begin{figure*}
    \centering
    \begin{subfigure}[b]{0.4\textwidth}
        \centering
        \includegraphics[width=0.8\textwidth]{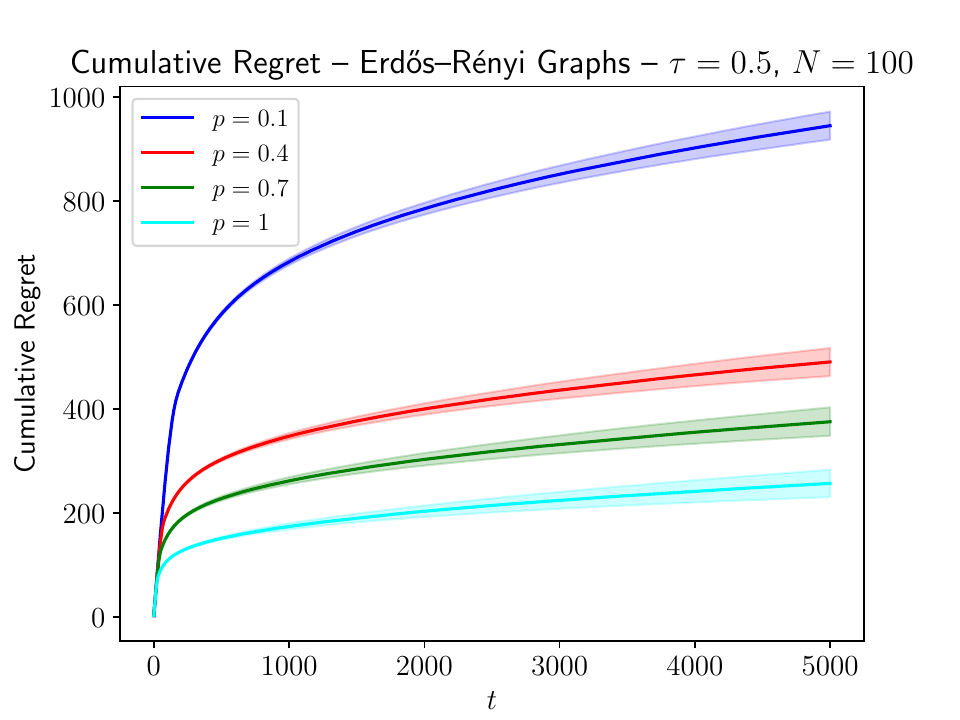}
        \vspace{-.2cm}
        \caption{Cumulative regret for Erdős–Rényi graphs}
        \label{fig:regret-erdos-renyi}
    \end{subfigure}
    \begin{subfigure}[b]{0.4\textwidth}
        \centering
        \includegraphics[width=0.8\textwidth]{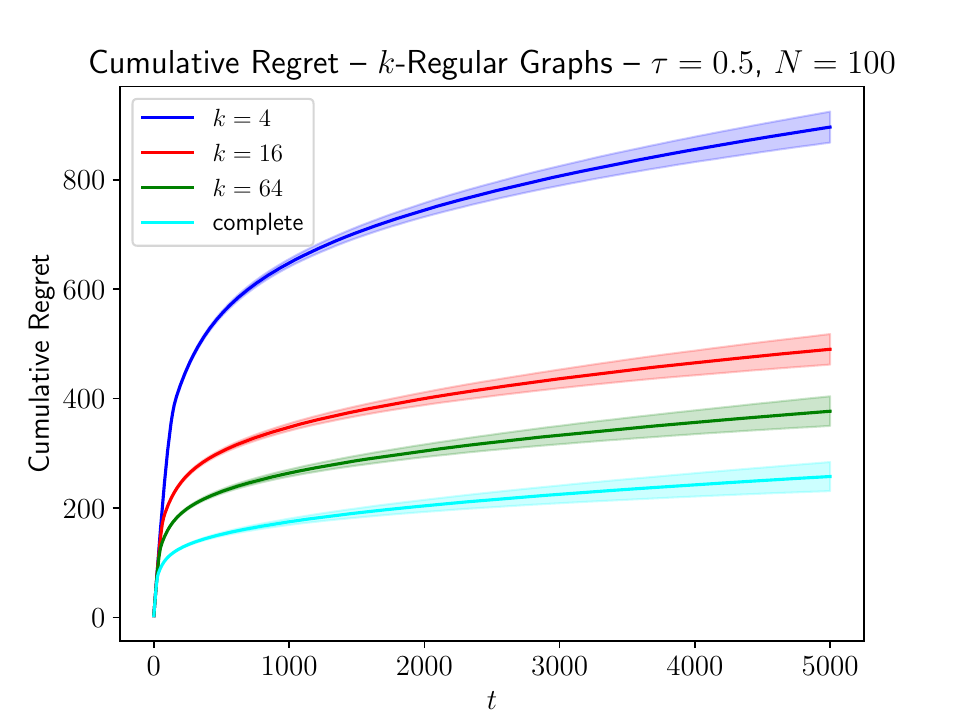}
        \vspace{-.2cm}
        \caption{Cumulative regret for $k$-regular graphs}
        \label{fig:regret-k-regular}
    \end{subfigure}
    \begin{subfigure}[b]{0.4\textwidth}
        \centering
        \includegraphics[width=0.8\textwidth]{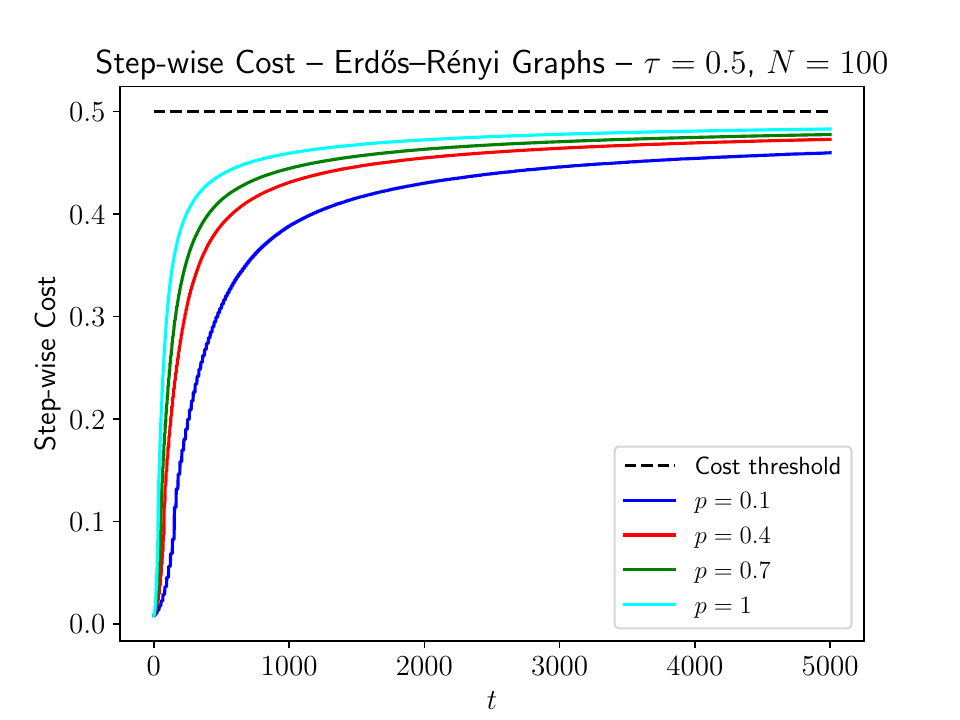}
        \vspace{-.2cm}
        \caption{Step-wise cost for Erdős–Rényi graphs}
        \label{fig:cost-erdos-renyi}
    \end{subfigure}
    \begin{subfigure}[b]{0.4\textwidth}
        \centering
        \includegraphics[width=0.8\textwidth]{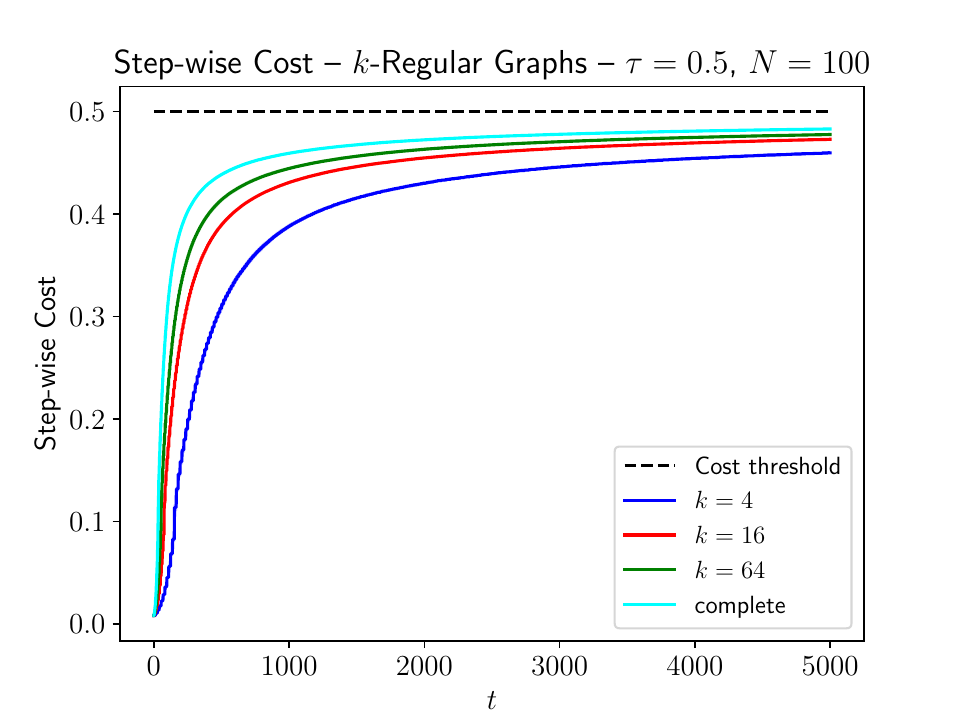}
        \vspace{-.2cm}
        \caption{Step-wise cost for $k$-regular graphs}
        \label{fig:cost-k-regular}
    \end{subfigure}
    \caption{Comparison of cumulative regret and step-wise cost for 
    Erdős–Rényi and $k$-regular graph structures}
    \label{fig:regret-cost}
    \vspace{-.5cm}
\end{figure*}
%
%
%%%%%%%%%%%%%%%%%%%%%%%%%%%%%%%%%%%%%
\section{Experiments}
In this section, we evaluate the performance of the proposed MA-OPLB algorithm across various network structures and parameters. Our experiments are designed to demonstrate how different graph topologies and network sizes influence the cumulative regret and the algorithm's effectiveness in adhering to the prescribed constraints. The core optimization problem in MA-OPLB (line 6 of Algorithm \ref{alg:multi_agent_oplb}) is inherently challenging due to its non-linear and non-convex nature, and currently, there are no established methods that guarantee convergence to the global minimizer. To address this complexity, we employ the convex programming tools with \(\ell_1\) relaxation methods introduced in \cite{afsharrad2024convex} for our simulations.

\textbf{Construction of the structure matrix.} Given a network topology represented by an undirected graph \( G \), we determine the locations of the zero and non-zero elements of \( W \) from the adjacency matrix \( A \) of \( G \). Then, we need to construct the structure matrix \( W \) as a symmetric, doubly stochastic matrix. To assign numerical values to the non-zero elements of \( W \) and ensure it is doubly stochastic, we follow the approach used in \cite{duchi2012dual}. Specifically, for non-\( k \)-regular graphs with maximum node degree \( \delta_{\max} \), we construct \( W \) using the normalized Laplacian matrix \( \mathcal{L} = I - D^{-1/2} A D^{-1/2}\), where \( D \) is a diagonal matrix with \( D_{ii} \) being the degree of node \( i \). We then define the structure matrix \( W \) as \(W = I - \frac{1}{\delta_{\max} + 1} D^{1/2} \mathcal{L} D^{1/2}\). In the case of \( k \)-regular graphs, we define \( W = I - \frac{k}{k + 1} \mathcal{L}\).

\textbf{Simulation parameters.} For the numerical simulations, unless otherwise stated, we use the following parameters: \(\tau=0.5\), \(c_0=0\), \(d=2\), \(T=5000\), \(N=100\), \(\lambda=0.1\), \(R=0.05\), \(\delta=0.01\), and the decision set at every time step \(\mathcal{D}_t = \{x \in \mathbb{R}^2 \mid \|x - (0.5, 0.5)\| \leq 1/\sqrt{2}\}\). For the global reward and cost parameters, we set \(\theta_*^{\sf global} = \begin{bmatrix}2.7 & 0.6 \end{bmatrix}^\top\) and \(\mu_*^{\sf global} = \begin{bmatrix}0.1 & 0.8 \end{bmatrix}^\top\). The norm of \(\theta_*^{\sf global}\) is intentionally chosen to be large for better visualization. The local reward and cost parameters are randomly selected.

\subsection{Performance over Different Network Structures}

We first analyze the impact of different network structures on the cumulative regret and cost by considering two types of graphs: Erdős-Rényi random graphs and $k$-regular graphs. In both cases, we set the number of agents to $N=100$ and vary the parameters that define the graph topology.

\subsubsection{Erdős-Rényi Random Graphs}

An Erdős-Rényi (ER) graph \(G(n, p)\) is a random graph where each possible edge between \(n\) nodes is included with independent probability \(p\). The parameter \(p\) controls the density of the graph: a larger \(p\) results in a more connected graph, hence a smaller \(|\lambda_2|\). In our experiments, we consider \(p \in \{0.1, 0.4, 0.7, 1.0\}\), where \(p=1.0\) corresponds to a complete graph where every pair of nodes is connected. We measure the cumulative regret and the step-wise cost incurred by the MA-OPLB algorithm on these graphs. The results, presented in Figures~\ref{fig:regret-erdos-renyi} and~\ref{fig:cost-erdos-renyi}, show how the connectivity of the network affects the algorithm's performance. As expected, higher connectivity (larger \(p\)) leads to better information dissemination among agents, resulting in lower cumulative regret.

By examining the cost plots in Figure~\ref{fig:cost-erdos-renyi}, we observe that the algorithm consistently maintains the cost below the threshold \(\tau\) for all graph configurations. Moreover, in graphs with higher connectivity (larger \(p\)), the cost approaches the threshold faster. This behavior occurs because increased connectivity allows agents to learn the environment more efficiently, enabling them to maximize their collected reward by pushing the cost up to the allowable limit. In contrast, in graphs with lower connectivity (smaller \(p\)), it takes longer for the agents to build confidence in their estimates of the global parameters. Consequently, the algorithm remains more conservative for an extended period, keeping the cost further below the threshold until sufficient confidence is achieved.

\subsubsection{\(k\)-Regular Graphs}

A \(k\)-regular graph is a graph where every node has exactly \(k\) neighbors. Regular graphs are valuable for studying the effect of uniform connectivity constraints on the network's performance. We construct \(k\)-regular graphs with \(k \in \{4, 16, 64, 99\}\), where \(k=99\) corresponds to a complete graph with \(N=100\) nodes. Similar to the ER graphs, we run the MA-OPLB algorithm on these \(k\)-regular graphs and record the cumulative regret and step-wise cost. The results, depicted in Figures~\ref{fig:regret-k-regular} and ~\ref{fig:cost-k-regular}, indicate that the cumulative regret decreases as the regularity degree \(k\) increases, confirming that more connectivity and a smaller \(|\lambda_2|\) results in a smaller regret. Moreover, by inspecting the step-wise cost graph, we observe behavior analogous to that in the Erdős-Rényi graphs. The algorithm maintains the cost below the threshold \(\tau\) in all cases, and as the regularity degree \(k\) increases (i.e., higher connectivity), the cost approaches the threshold more rapidly. This can be attributed to the same reasons explained earlier: higher connectivity enables agents to learn the environment more quickly, allowing them to confidently push the cost up to the limit to maximize the collected reward. 
\subsection{Effect of Network Size}
We now investigate the impact of the number of agents \( N \) on the performance of the MA-OPLB algorithm. According to our theoretical regret bound, the regret scales with \( \frac{\log N}{\sqrt{N}} \), suggesting that as \( N \) increases, the overall regret should \emph{decrease}. This improvement arises because, although the communication overhead grows proportionally to \( \log N \), the reduction in noise achieved by averaging across more agents--scaling with \( \frac{1}{\sqrt{N}} \)--has a more substantial effect. Consequently, employing a larger number of agents enhances the algorithm's performance. To better observe the noise reduction effect, we choose the noise level parameter \(R=0.5\) for the numerical simulations.

To isolate the effect of \( N \) and ensure that variations in the second largest eigenvalue \( |\lambda_2| \) do not influence the results, we conduct our experiments on \emph{complete graphs} for different values of \( N \). In complete graphs, all pairs of nodes are connected, and the second largest eigenvalue remains constant (specifically, \( |\lambda_2| = 0 \)). This allows us to attribute any observed changes in performance solely to the variation in \( N \).

In our experiments, we consider \( N \in \{4, 16, 64\} \) agents and, instead of measureing the cumulative regret, we use a more direct measure of the algorithm's learning progress, the cumulative estimation error in the global reward parameter, given by
% \begin{equation}
\(\sum_{t=1}^T \left\| \frac{1}{N} \sum_{i=1}^N \hat{\theta}^{\sf global, i}_t - \theta_*^{\sf global} \right\|.\)
% \end{equation}
This measure represents the cumulative norm of the difference between the average estimated global reward parameter and the true parameter. This metric provides a clearer visualization of how quickly the agents collectively learn the true parameter as \( N \) varies.
\begin{figure}[h!]
    \centering
    \includegraphics[width=.35\textwidth]{}
    \vspace{-.5cm}
    \caption{Cumulative reward parameter estimation error 
    % for complete graphs for different values of $N$
    }
    \label{fig:cumulative-parameter-error}
\end{figure}
The results, presented in Figure~\ref{fig:cumulative-parameter-error}, show that the cumulative estimation error decreases as \( N \) increases. This confirms that larger networks allow agents to average out noise more effectively, leading to faster and more accurate estimation of the global reward parameter.
\subsection{Convergence Behavior}
Finally, we study the convergence behavior of the MA-OPLB algorithm by examining the trajectory of the selected actions over time. In a 2D action space, we plot the actions chosen by the agents at each time step to visualize how they converge to the optimal action while satisfying the global cost constraint.
\begin{figure}[h!]
    \centering
    \includegraphics[width=.38\textwidth]{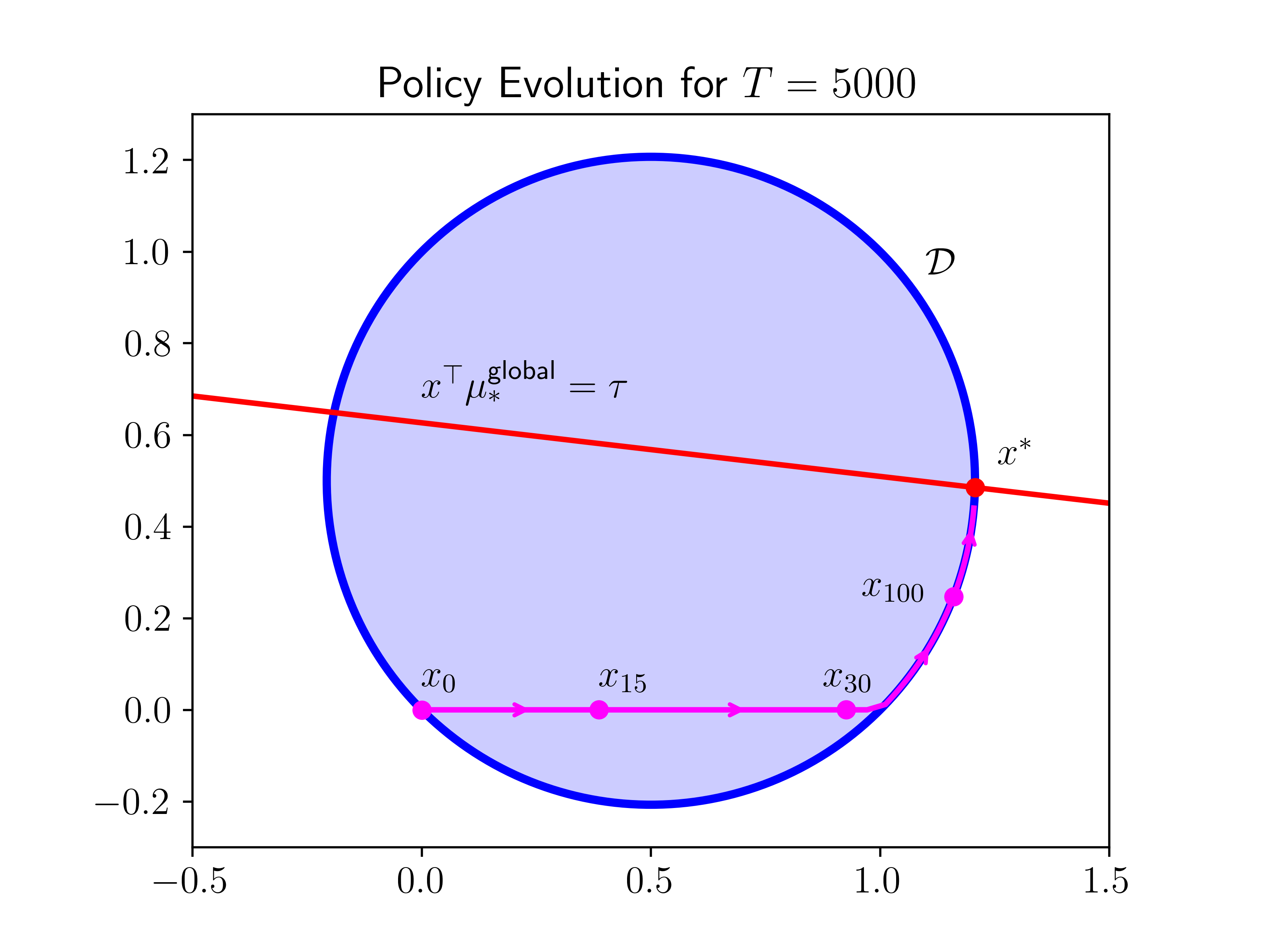}
    \vspace{-.5cm}
    \caption{Trajectory of actions over time in a 2D action space}
    \label{fig:action-trajectory}
\end{figure}
Figure~\ref{fig:action-trajectory} illustrates the trajectory of the actions. The plot shows that the actions progressively move towards the optimal action, confirming the algorithm's ability to learn and adapt over time. Moreover, the actions remain within the feasible region defined by the cost constraint, demonstrating that the algorithm effectively ensures constraint satisfaction throughout the learning process.
\section{Conclusion and future work}
We studied the problem of multi-agent constrained stochastic linear bandits, where agents aim to maximize cumulative rewards under a global cost constraint. Each agent interacts with its own local bandit problem with unique reward and cost parameters, and the collective goal is to optimize according to the global parameters, which are averages of the local ones. We proposed the MA-OPLB algorithm, which combines exploration and exploitation with an accelerated consensus method, allowing agents to estimate the global parameters through local communications with their neighbors. We derived a frequentist regret bound for MA-OPLB, showing that the regret scales with the spectral gap of the network's communication matrix and inversely with the cost gap of a known feasible action.
%
% Our experimental results confirmed the theoretical analysis, demonstrating that MA-OPLB effectively minimizes regret across different network structures while respecting the cost constraint. 
Future work could extend this framework to scenarios where agents are not required to play the same action, allowing for more flexibility and potentially improved performance in heterogeneous environments.

\bibliographystyle{IEEEtran}
\bibliography{refs.bib}
\end{document}